\title[Robust GNNs via Probabilistic Lipschitz Constraints]{Robust Graph Neural Networks via Probabilistic \\ Lipschitz Constraints}
\DeclareMathOperator*{\minimize}{minimize}
\DeclareMathOperator{\subjectto}{subject\ to}
\DeclareMathOperator*{\argmax}{arg\,max}
\DeclareMathOperator*{\argmin}{arg\,min}
\newcommand{\norm}[1]{\ensuremath{\left\| #1 \right\|}}
\author{\Name{Raghu Arghal}\thanks{Authors contributed equally} \Email{rarghal@seas.upenn.edu} \\
    \Name{Eric Lei}\textsuperscript{\textcolor{blue}{\thefootnote}} \Email{elei@seas.upenn.edu}\\
    \Name{Shirin Saeedi Bidokhti} \Email{saeedi@seas.upenn.edu} \\
    \addr Dept. of Electrical and Systems Engineering, University of Pennsylvania, Philadelphia, PA 19104
}
\begin{document}

\maketitle

\begin{abstract}%
Graph neural networks (GNNs) have recently been demonstrated to perform well on a variety of network-based tasks such as decentralized control and resource allocation, and provide computationally efficient methods for these tasks which have traditionally been challenging in that regard. However, like many neural-network based systems, GNNs are susceptible to shifts and perturbations on their inputs, which can include both node attributes and graph structure. In order to make them more useful for real-world applications, it is important to ensure their robustness post-deployment. Motivated by controlling the Lipschitz constant of GNN filters with respect to the node attributes, we  propose to constrain the frequency response of the GNN's filter banks. We extend this formulation to the dynamic graph setting using a continuous frequency response constraint, and solve a relaxed variant of the problem via the scenario approach. This allows for the use of the same computationally efficient algorithm on sampled constraints, which provides PAC-style guarantees on the stability of the GNN using results in scenario optimization. We also highlight an important connection between this setup and GNN stability to graph perturbations, and provide experimental results which demonstrate the efficacy and broadness of our approach.
\end{abstract}

\begin{keywords}%
  graph neural networks, constrained optimization, robust learning
\end{keywords}

\section{Introduction}

    Graph neural networks (GNNs) have proven to be a powerful method for network-based learning tasks, achieving state-of-the-art performance in many applications such as epidemic spread prediction (\cite{kapoor2020examining, epidemicsGNN}), resource allocation (\cite{gao2020resource}), and decentralized control (\cite{Tolstaya2019LearningDC, yang2021communication}). The success of GNNs can be largely attributed to the graph convolution operation, which yields many desirable properties, such as permutation invariance and equivariance (\cite{keriven2019universal}) and transferability (\cite{ruiz2020graphon}) to graphs of varying size.  However, like many other neural network models, GNNs have been shown to be particularly vulnerable to data shifts, perturbations, noise, and many other forms of attacks. This is of critical importance for control-based applications, where input data, such as sensor inputs or infection counts, can be inherently noisy. 
    
    An important property of GNNs that makes a distinction between GNN robustness and traditional neural network robustness is the fact that GNNs are models with two inputs: the graph signals (i.e. node attributes), and the graph adjacency matrix itself. Thus, there are two separate ways in which a GNN can experience shifts on the data; either through (i) shifts of the graph signals they operate on (\cite{CertifiableRobustGCN}), (ii) shifts of the graph adjacency matrix (\cite{bojchevski19a, GraphPoisoning, GNNStability, cervino2021training, dai2018adversarial}), or (iii) both (\cite{advGNNZugner, RGCN}). Many of the aforementioned works enforce robustness to such attacks using variants of learning under distributional shifts (\cite{biggio2013evasion,szegedy,carlini2017,madry2017towards, hendrycks2019natural, Duchi2018LearningMW, Robey2020ModelBasedRD, robey2021adversarial}) and applying them to GNN settings. In these works, one typically assumes some sort of model of how the data might be shifted (e.g. $\ell^\infty$ attacks) and aims to ensure robustness against attacks in line with these specified models. 
    
    In many safety-critical applications that use GNNs, however, it is likely that one does not know the sort of noise to be encountered after the GNN has been deployed. Therefore, it would be useful to have a method that is \textit{agnostic} to the data shift model of the system. In this paper, we take an approach that follows Lipschitz-training methods in robust machine learning (\cite{ParsevalNetworks, fazlyab2019efficient, LipschitzBounds}), which enforce stability and robustness by constraining the Lipschitz constant of the neural network during training, and assume no knowledge of a data shift model. Under this paradigm for GNNs, we demonstrate how both forms of GNN stability (shifts on node attributes and shifts on graph structure) have an inherent connection to the frequency responses of the GNN filters, which are simply polynomials with coefficients as the filter coefficients. Therefore, both forms of robustness for GNNs can be achieved via a constraint on the graph filter frequency response. 
    
    In what follows, we first consider shifts of graph signals on a fixed graph, and motivate a frequency response constraint by demonstrating that the Lipschitz constant of a graph filter (w.r.t. graph signals) is given by the $\ell^\infty$ norm of the frequency response evaluated on the spectrum of the graph shift operator. This finite constrained problem is easily solvable via $\ell^\infty$ projection. To be universally stable to graph signal shifts across a class of graphs, we extend the discrete frequency response constraint to a continuous constraint. We propose a semi-infinite problem formulation to enforce the continuous constraints, which we relax to a chance-constrained problem for computational tractability. The fact that frequency response is given by polynomials on the filter weights provides inherent structure to the problem. Therefore, efficient methods can be used such as the scenario approach, which samples the constraints and provides sample complexity guarantees via VC theory. We then show how GNN stability with respect to shifts on the graph structure, which can be enforced by a constraint on the frequency response's derivative (\cite{GNNStability}), can also be easily performed using our framework. We provide experiments that demonstrate the efficacy of our approach, demonstrating stability to various noise distributions, as well as adversarial attacks, in several application settings.
    
\section{Background}
    We approach graph neural networks from the graph signal processing point of view (\cite{GNN}). Specifically, we consider an $n$-node graph $\calG = (\calV, \calE)$, and use it to define linear, shift-invariant filters that operate on graph signals $x \in \setR^n$, where the $i$-th entry $x_i$ corresponds to the value of the $i$-th node. Let $S \in \setR^{n \times n}$ denote a graph shift operator (e.g. adjacency or Laplacian of $\calG$), which we assume to be symmetric (i.e. $\calG$ is undirected). We can define a $K$-order graph filter as a vector of coefficients $h \in \setR^K$. In order to filter some graph signal $x$ with $h$ with respect to the graph shift operator $S$, we use the graph convolution $h *_S x = \sum_{k=1}^K h_k S^{k-1} x $. Intuitively, the graph convolution takes shifted copies of $x$ and weights them by $h_k$. Here, the ``shift'', which in linear time-invariant filters is simply a time delay, corresponds to aggregations of the $k-1$-th hop neighborhood $S^{k-1}x$.
    
    Defining graph convolutions this way allows one to define the frequency response of filter $h$ as $H(\lambda) = \sum_{k=1}^{K} h_k \lambda^{k-1}$ which is a $K-1$-degree polynomial with coefficients as the filter coefficients, evaluated at some frequency $\lambda \in \setR$. If we let $S = VD V^H$ be the graph Fourier transform (eigendecomposition) of $S$, where $D = \mathrm{diag}(\{\lambda_1, \dots, \lambda_n\})$ contains the spectrum of $S$, then the graph convolution can be written as $h *_S x = V \left(\sum_{k=1}^K h_k D^{k-1} \right) V^H x = V H(D) V^H x$, where $H(D) = \mathrm{diag}(H(\lambda_1), \dots, H(\lambda_n))$ contains the frequency response of $h$ evaluated on the eigenvalues of $S$. The frequency response (and the eigenvalues on which it is evaluated) is a key tool with many implications for GNN stability properties, as will be described in Section~\ref{problem} and \ref{dynamic}.
    
    A graph neural network (GNN) can then be built from graph filters and described as a cascade of $Q$ layers, where each layer is given by a graph filter bank followed by a pointwise nonlinearity (\cite{GNN}). Concretely, at the $q$-th layer, let $\calH^{(q)} \in \setR^{G_{q-1} \times G_q \times K}$ denote the filter tensor, which is simply $G_{q-1}$ filter banks containing $G_q$ filters each, where each filter contains $K$ taps. 
    We define $G_0 = d$ to be the number of input feature dimensions of a sample of our data $X \in \setR^{n \times d}$, which is a graph signal with $d$ features. Then, the output of layer $q$ is simply
    \begin{equation}
    	X_{q} = \sigma\left(\sum_{k=1}^K S^{k-1} X_{q-1} \calH^{(q)}_k\right)
    \end{equation}
    where $\sigma(\cdot)$ is a pointwise nonlinearity, and $X_0 := X$ is the input to the GNN. 
    We denote the final output after all $Q$ layers on input $X$ as $\Phi(X; S, \calH^Q)$ where $\calH^Q = \{\calH^{(1)}, \dots, \calH^{(Q)}\}$ is the collection of all filter tensors across all layers. These are the weights that parametrize the GNN. 

    \subsection{Notions of GNN Stability}
    We now formalize the two notions of GNN stability corresponding to perturbations on the two inputs of a GNN: the underlying graph and the signal supported upon it. The first notion is provided in (\cite{GNNStability}).
    
    \begin{definition}[GNN Stability to Graph Perturbations] \label{def:graph_perturbations}
        A GNN $\Phi$ is $C_1$-\textit{stable to graph perturbations} with respect to a set of node attributes $\calX$ if 
        \begin{equation}
            \sup_{x \in \calX} \|\Phi(x; S, \calH) - \Phi(x; S', \calH)\| \leq C_1 d(S, S') \quad \forall S, S' \in \setS
        \end{equation} for distance $d$ on graph shift operators.
    \end{definition}
    \begin{definition}[GNN Stability to Signal Perturbations] \label{def:signal_perturbations}
         A GNN $\Phi$ is $C_2$-\textit{stable to signal perturbations} with respect to a set of graphs $\setS$ if 
        \begin{equation}
            \sup_{S \in \setS}||\Phi(x;S,\calH)-\Phi(x';S,\calH)||\leq C_2||x-x'||\quad\forall x,x'\in\calX
        \end{equation}
        
    \end{definition}
    
    While (\cite{GNNStability}) provides conditions and bounds pertaining to the former, the latter has not been explored in depth. Moreover, the latter of these notions is of particular importance in control settings where knowledge of the graph (e.g. communication network, proximity, etc.) is well known, but graph signals often originate from sensors that may be noisy and/or miscalibrated. In Section~\ref{problem} and \ref{dynamic}, we will first focus on the stability notion in Definition~\ref{def:signal_perturbations}, and then connect back to Definition~\ref{def:graph_perturbations} in Section~\ref{connection}.
    
\section{Problem Formulation}\label{problem}
    In this section, we motivate our approach, which is to constrain the frequency response of the GNN filters during training. We first discuss the case in which there is a fixed graph. Later, in Section \ref{dynamic}, we generalize to the case where we may have set of graphs, which are applicable in time-varying problem settings.

    \subsection{Lipschitz Filters in the Graph Signal Domain} \label{lipschitz}
    
       If we want to impose the property that signals which are close in some distance will result in similar graph filter outputs, i.e. that Definition~\ref{def:signal_perturbations} holds, we can try to restrict the Lipschitz constant of the filter, which is defined as 
        \begin{equation}
            \Lip_p(h) \triangleq \sup_{x_1 \neq x_2 \in \setR^n}\frac{\|h *_S x_1 - h *_S x_2\|_p}{\|x_1 - x_2\|_p}
        \end{equation}
        which is defined with respect to the $p$-norm, $\|x\|_p = (\sum_{i=1}^n |x_i|^p)^{1/p}$. The following lemma establishes the connection between the Lipschitz constant of $h$ and the frequency response of $h$, $H(\lambda)$.
        
        \begin{lemma} \label{lemma:lipschitz}
            Let $h \in \setR^K$ be a vector of filter coefficients of length $K$. The Lipschitz constant of $h$, which is taken with respect to changes in input graph signals on some graph shift operator $S$, is given by $\Lip_p(h) = \max_{\lambda \in \Lambda(S)} |H(\lambda)|$, where $\Lambda(S)$ is the spectrum of $S$.
        \end{lemma}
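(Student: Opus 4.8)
The plan is to use linearity of the filter to recast the Lipschitz constant as an induced operator norm, then pin it down by a matching pair of bounds, handling the easy (lower) direction first and isolating the genuine difficulty in the upper direction. First I would observe that $h *_S x = \sum_{k=1}^{K} h_k S^{k-1} x = H(S)x$ is linear in $x$, so $h *_S x_1 - h *_S x_2 = H(S)(x_1 - x_2)$ depends only on $z := x_1 - x_2$. Writing $H(S) = V H(D) V^H$ and substituting $z$ for the free variable collapses the defining supremum to the induced operator $p$-norm
\begin{equation}
    \Lip_p(h) = \sup_{z \neq 0} \frac{\norm{H(S) z}_p}{\norm{z}_p},
\end{equation}
so the whole lemma reduces to evaluating this induced norm of $H(S)$.

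For the lower bound, which I would settle first because it holds cleanly for every $p$, I would test the supremum against the eigenvectors of $S$. If $S v_j = \lambda_j v_j$, then $S^{k-1} v_j = \lambda_j^{k-1} v_j$ and hence $H(S) v_j = H(\lambda_j)\, v_j$, so the ratio $\norm{H(S) v_j}_p / \norm{v_j}_p = |H(\lambda_j)|$ exactly, independently of $p$. Choosing the index that maximizes $|H(\lambda_j)|$ yields $\Lip_p(h) \geq \max_{\lambda \in \Lambda(S)} |H(\lambda)|$.

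For the matching upper bound I would pass to the graph Fourier domain. Setting $\hat z = V^H z$, the crucial structural fact is that the filter acts diagonally there: $H(S) z = V H(D) \hat z$ with $H(D) = \mathrm{diag}(H(\lambda_1), \dots, H(\lambda_n))$. For the diagonal operator the bound is immediate and $p$-independent, since
\begin{equation}
    \norm{H(D)\hat z}_p^p = \sum_i |H(\lambda_i)|^p\, |\hat z_i|^p \;\leq\; \big(\max_i |H(\lambda_i)|\big)^p\, \norm{\hat z}_p^p .
\end{equation}
It then remains to transfer this estimate through the change of basis $V$, i.e. to relate $\norm{H(S)z}_p = \norm{V H(D)\hat z}_p$ back to $\norm{H(D)\hat z}_p$ and $\norm{z}_p$ back to $\norm{\hat z}_p$.

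The main obstacle is precisely this transfer through $V$: establishing that the orthogonal eigenbasis map is an isometry in the relevant norm, $\norm{V y}_p = \norm{y}_p$, so that both numerator and denominator are preserved and the diagonal bound descends to $H(S)$. I would isolate this invariance of the graph Fourier transform as a single load-bearing lemma and make it the hinge of the upper bound, the structural property to exploit being the orthogonality of $V$ inherited from the symmetry of $S$. With that invariance in hand, the upper bound $\norm{H(S)z}_p \leq \max_{\lambda \in \Lambda(S)} |H(\lambda)|\, \norm{z}_p$ follows, and combining it with the eigenvector lower bound pins $\Lip_p(h)$ exactly at $\max_{\lambda \in \Lambda(S)} |H(\lambda)|$.
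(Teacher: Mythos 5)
Your proposal follows essentially the same route as the paper's proof --- diagonalize the filter as $H(S) = VH(D)V^H$, bound the diagonal operator in the Fourier domain, and transfer the estimate back through $V$ --- and your eigenvector lower bound is actually more careful than the paper's tightness claim, which merely asserts the upper bound is achieved. However, the step you correctly isolate as load-bearing is a genuine gap: the claim $\norm{Vy}_p = \norm{y}_p$ for the orthogonal eigenbasis $V$ is \emph{false} for $p \neq 2$. Orthogonal matrices are $\ell^p$-isometries only in the Euclidean case; for instance, with $V$ a rotation by $\pi/4$ and $y = e_1$, one has $\norm{Vy}_1 = \sqrt{2} \neq 1 = \norm{y}_1$. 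Worse, the obstacle cannot be patched by a cleverer transfer, because the lemma's conclusion itself fails for general $p$: take
\begin{equation}
    S = \begin{bmatrix} 2 & 1 \\ 1 & -2 \end{bmatrix}, \qquad h = (0,1)^\top, \quad \text{so } H(\lambda) = \lambda .
\end{equation}
Then $\Lip_1(h)$ is the induced $\ell^1$ operator norm of $S$ (maximum absolute column sum), which equals $3$, while $\Lambda(S) = \{\pm\sqrt{5}\}$ gives $\max_{\lambda \in \Lambda(S)} |H(\lambda)| = \sqrt{5} < 3$. Your eigenvector argument, which is valid for every $p$, establishes only the inequality $\Lip_p(h) \geq \max_{\lambda \in \Lambda(S)} |H(\lambda)|$, and this example shows it can be strict.

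To be fair, this is precisely the silent gap in the paper's own proof: the equalities $\norm{VH(D)V^H z}_p = \norm{H(D)V^H z}_p$ and $\norm{H(D)V^H}_p = \norm{H(D)}_p$ there invoke the same unitary invariance, which is valid only for $p=2$. (The remaining ingredient, that the induced $p \to p$ norm of the diagonal matrix $H(D)$ is $\max_i |H(\lambda_i)|$, does hold for all $p$, as both you and the paper use.) For $p = 2$ your argument is complete and correct: unitary invariance of $\norm{\cdot}_2$ closes the transfer, and combined with your eigenvector lower bound it pins down the Lipschitz constant exactly --- indeed more rigorously than the paper, since your two-sided argument replaces the paper's somewhat circular claim that the operator-norm bound ``can be achieved.'' The honest conclusion is that the lemma should be stated for $p = 2$ (equivalently, as a spectral-norm statement for the normal matrix $H(S)$), in which case your proof, with the isometry lemma made explicit and restricted to the Euclidean norm, is a valid and slightly strengthened version of the paper's.
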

        
        \begin{proof}
        Recall that $h *_S x = V H(D) V^H$, where $V$ are the eigenvectors of $S$, and $H(D)$ is a diagonal matrix containing the frequency response of $h$ evaluated at $\Lambda(S):=\{\lambda_1,\dots,\lambda_n\}$, which is the spectrum of $S$. Then
        \begin{align}
            \|h *_S x_1 - h *_S x_2 \|_p &= \|VH(D) V^H (x_1-x_2)\|_p 
            = \|H(D)V^H (x_1-x_2)\|_p \nonumber\\
            &\leq \|H(D) V^H \|_p \|x_1 - x_2\|_p 
            = \|H(D)\|_p \|x_1-x_2\|_p
        \end{align}
        Since this upper bound on $\|h *_S x_1 - h *_S x_2 \|_p$ can be achieved when $x_1 - x_2$ is equal to the vector achieving the max in the induced $p$-norm of $H(D)V^H$, we have that $\Lip_p(h) = \|H(D)\|_p = \max_{i \in \{1,\dots,n\}} |H(\lambda_i)|$, where the second equality holds since $H(D)$ is diagonal.
        \end{proof}
        
         Thus, to enforce stability to graph signal perturbations, our objective is to \textit{constrain the maximum absolute value of $H(\lambda)$}, where the max is taken over the eigenvalues of the graph shift operator $S$. When this is applied to each layer of a GNN, we arrive at the following statistical risk minimization problem, where $(X,y) \sim \calD$ is the data distribution and $\ell$ is some loss function:
        \begin{align}
            \minimize_{\calH^{(1)},\dots,\calH^{(Q)}}\qquad&\E_{(X,y)\sim\calD}[\ell(\Phi(X;S,\calH^Q),y)] \nonumber \\
            \subjectto\qquad& |H^{(q)}_{f,g}(\lambda)|\leq c\quad \forall q\in[Q],f\in[G_{q-1}],g\in[G_q], \forall \lambda\in\Lambda(S)
        \end{align}
        where $\Lambda(S)$ refers to the spectrum of matrix $S$, and $H^{(q)}_{f,g}(\lambda)$ is the frequency response of filter $g$ in the $f$-th bank of the $q$-th layer. We can simplify notation by defining 
        \begin{equation}
            H^*(\lambda)\triangleq \max_{q\in[Q]} \max_{f\in[G_{q-1}]} \max_{g\in[G_q]}|H^{(q)}_{f,g}(\lambda)|
            \label{eq:max_freq_resp}
        \end{equation} 
        which is simply the max absolute frequency response of all filters in the GNN evaluated at $\lambda$.
        This leads us to the following constrained learning problem:
        \begin{align}
            \minimize_{\calH^{(1)},\dots,\calH^{(Q)}}\qquad&\E_{(X,y)\sim\calD}[\ell(\Phi(X;S,\calH^Q),y)] \nonumber\\
            \subjectto\qquad&H^*(\lambda)\leq c\quad \forall \lambda\in\Lambda(S)
            \label{eq:base_prob}
        \end{align}
        
        \begin{remark}[Multiplicative Lipschitz Constant of the GNN]
        Solving \eqref{eq:base_prob} allows one to guarantee a bound on the Lipschitz constant that is multiplicative in the number of layers of the GNN with respect to the input graph signals. This allows one to ensure stability in the sense of Def.~\ref{def:signal_perturbations} on a singleton graph, i.e. $\setS = \{S\}$. Sec.~\ref{dynamic} will generalize this to general graph sets $\setS$. Also, the constraint $c$ can also differ across layers depending on the problem and desired outcome. 
        \end{remark}

        \subsection{Problem Realization for Static Graphs}\label{static}
        To solve $\eqref{eq:base_prob}$, we would like to write the constraint directly in terms of the GNN filter weights. To do so, let $\mathcal{V}_{\Lambda(S)}$ be the Vandermonde matrix evaluated on the values of $\Lambda(S)$, and truncated to the length of the filters (we assume $K < |\Lambda(S)|$), i.e. 
        \begin{equation}
            \mathcal{V}_{\Lambda(S)} \triangleq \begin{bmatrix} 
            1 & \lambda_1 & \lambda_1^2 & \dots & \lambda_1^{K-1} \\
            1 & \lambda_2 & \lambda_2^2 & \dots & \lambda_2^{K-1} \\
            \vdots & \vdots & \vdots & \ddots & \vdots \\
            1 & \lambda_m & \lambda_m^2 & \dots & \lambda_m^{K-1}
            \end{bmatrix}
        \end{equation}
        The Lipschitz constant can be expressed directly in terms of the filter coefficients via the Vandermonde matrix: $\Lip_p(h) = \max_{i\in\{1,\dots,n\}} \left|\sum_{k=1}^{K} h_k \lambda_i^{k-1}\right|= \|\mathcal{V}_{\Lambda(S)} h \|_\infty$. Hence we simply need to constrain $\|\mathcal{V}_{\Lambda(S)} h \|_\infty \leq L$ in order to ensure $L$-Lipschitzness of filter $h$ with respect to the inputs.
        
            
            

        This implies that in order to solve \eqref{eq:base_prob}, we should constrain the the $\infty$-norm of $\calV_{\Lambda(S)}h$ for each filter $h$ in the GNN, and solve the following problem for a $Q$-layer GNN:
        \begin{align}
                \minimize_{\mathcal{H}^{(1)},\dots,\mathcal{H}^{(Q)}}\quad& \mathop{\mathbb{E}}_{(x,y)\sim \mathcal{D}} [\ell (\Phi(x; S,\mathcal{H}^Q), y)] \nonumber \\ 
                \mathrm{subject~to}\quad 
                &\|\mathcal{V}_{\Lambda(S)}\mathcal{H}^{(q)}_{(f,g,:)}\|_\infty \leq c \quad \forall q \in [Q], f \in [G_{q-1}], g \in [G_q]
                \label{eq:xstable}
        \end{align}
        where $\mathcal{H}^{(q)}_{(f,g,:)} \in \mathbb{R}^{K}$ is filter $g$ in the $f$-th bank in the $q$-th layer. This is now an optimization over the weight tensors with $\sum_{q=1}^Q G_q G_{q-1}$ constraints ($G_0$ is the number of input channels of $X$, and layer $q$ has $G_{q-1}$ inputs and $G_q$ outputs). Assuming that the nonlinearities are $1$-Lipschitz, the above optimization problem guarantees a GNN Lipschitz constant bound, described in the previous section. 
        
    \subsection{Enforcing Frequency Response Constraints on a Finite Set of Eigenvalues}
        In order to solve \eqref{eq:xstable}, we would like to use projected gradient descent, which guarantees that the filters we learn satisfy the constraints. While primal-dual algorithms have been used before, e.g. in (\cite{cervino2021training}), they do not guarantee the solution lies in the feasible set, due to the lack of strong duality. To solve the projection 
        \begin{equation}
            \mathrm{Proj}_{\{h:\|\calV_{\Lambda(S)}h\|_\infty \leq c\}} (g) = \argmin_{h:\norm{\calV_{\Lambda(S)}h}_\infty \leq c} \norm{h-g}_2
            \label{eq:original_obj}
        \end{equation}
        we may first change the basis to $V$ where $\calV_{\Lambda(S)}=U\Sigma V^\top$ is the SVD of $\calV_{\Lambda(S)}$, and $\Sigma_{ii} = \sigma_i$, for $i \leq K$. Since the set $\{h \in \setR^K: \|\Sigma h\|_\infty \leq c\} = \{h \in \setR^K: |h_i| \leq \frac{c}{|\sigma_i|}, i \in [K]\}$ is a box, the solution in the original basis \eqref{eq:original_obj} is given by $V h^{\mathrm{proj}}$, where
        \begin{equation}
            h^{\mathrm{proj}}_i = [\mathrm{Proj}_{\{h:\|\Sigma h\|_\infty \leq c\}}(V^\top g)]_i = \begin{cases} \mathrm{sign}([V^\top g]_i) \frac{c}{|\sigma_i|} \qquad |[V^\top g]_i| > \frac{c}{|\sigma_i|}\\ [V^\top g]_i \qquad \qquad \qquad |[V^\top g]_i| \leq \frac{c}{|\sigma_i|}  \end{cases}
            \label{eq:projection}
        \end{equation}
          The procedure in \eqref{eq:projection} is easily tensorized for each weight tensor $\calH^{(q)}$, since $\calH^{(q)}$ is just $G_{q-1} \times G_q$ filter vectors of length $K$. This yields the algorithm described in Algorithm~\ref{alg:parseval}.

        \begin{algorithm}[t]
            \DontPrintSemicolon
            \caption{Discrete frequency response constraints via projected SGD}
            \label{alg:parseval}
            \KwIn{Set of eigenvalues $\Lambda$, step size $\eta_t$, batch size $B$}
            \KwOut{Trained weight tensors $\mathcal{H}^Q \in \mathbb{R}^{Q \times G_{q-1} \times K \times G_q}$}
            \SetKwFunction{GNNLipschitzTraining}{GNNLipschitzTraining}
            \SetKwProg{myproc}{Procedure}{}{}
            Randomly initialize GNN filter weights $\mathcal{H}^Q \in \mathbb{R}^{Q \times G_{q-1} \times K \times G_q}$ \\
            Generate Vandermonde matrix $\calV_\Lambda$ evaluated on $\Lambda$ \\
            Compute the SVD $\mathcal{V}_\Lambda = U\Sigma V^\top$, where $\Sigma_{ii} = \sigma_{i}$\\
            \While{not converged}{
                    Sample a batch $\{(X_i,y_i)\}_{i=1}^B \sim \mathcal{D}$\\
                    $\mathcal{H}^Q \leftarrow \mathcal{H}^Q - \eta_t  \frac{1}{B}\sum_{i=1}^B  \nabla_{\mathcal{H}^Q} \ell(\Phi(X_i;S,\mathcal{H}^Q), y_i)$ \\
                    \For {$q=1,\dots,Q$} { 
                        \For {each filter $h$ in $\calH^{(q)}$} {
                            Solve $h^{\mathrm{proj}} = \mathrm{Proj}_{\{h:\|\Sigma h\|_\infty \leq c\}}(V^\top h)$ using \eqref{eq:projection} \\
                            $h \leftarrow V h^{\mathrm{proj}}$
                        }
                    }
                }
                
        \Return{$\calH^Q$}
        \end{algorithm}

        While \eqref{eq:xstable} was formulated in the setting of learning problems involving only a single static graph $S$, we will see in Section~\ref{dynamic} that this setup extends to the dynamic graph setting as well as methods enforcing GNN stability to graph perturbations, which we connect in Section~\ref{connection}. Therefore, Algorithm~\ref{alg:parseval} can be used as the workhorse of many Lipschitz-based learning methods for GNNs.

\section{Extension to Dynamic Graphs}\label{dynamic}
    In control applications, it is common for the underlying network of interest to be changing in time. This is especially prevalent in time-series applications where the graph changes depending node locality at each time step. For instance, in decentralized control settings, agents may only share information with local neighbors on a dynamic graph (see Section \ref{flocking}). Thus, it is important that networks be stable to inputs on a broad set of possible graphs that it may encounter.

    \subsection{Formulating a Semi-Infinite Optimization Problem}
        More concretely, we wish now to extend the bound outlined in Section \ref{lipschitz}, which was with respect to a single graph, to the broader sense of stability defined in Definition~\ref{def:signal_perturbations}, which is universally stable on some set of graphs $\setS$. Under that definition, it follows that to guarantee input stability on a (potentially infinite) set of graphs $\setS$, the same constraint must be applied to all $\lambda\in \Lambda(\setS) \triangleq \bigcup_{S\in\setS}\{\lambda\in \Lambda(S)\}$. Note that $\Lambda(\setS)$ is now a set function containing the union of eigenvalues of all graphs in $\setS$. The meaning of $\Lambda(\cdot)$ should be understood depending on its argument.

        To consider all graph shift operators $S \in \setS$, it might be the case that the set of eigenvalues $\Lambda(\setS)$ might be very large, which might make \eqref{eq:xstable_infinite} difficult or intractable; however, one can easily obtain simple bounds on $\Lambda(\setS)$. Indeed, by the Gershgorin circle theorem (\cite{GCT}), we have a guarantee that $\Lambda(\setS) \subseteq [-n+1, n]$ for any $\setS \subseteq \{0,1\}^{n \times n}$, i.e. $\setS$ contains graph shift operators that represent adjacency matrices. In later sections, we will see that in real-world settings, we can constrain $\Lambda(\setS)$ to much smaller intervals than the one given via Gershgorin's theorem.
        
        To enforce input stability on a GNN, we can solve the semi-infinite constrained problem:
        \begin{align}
                \min_{\mathcal{H}_1,\dots,\mathcal{H}_Q}\quad& \mathop{\mathbb{E}}_{(X,y)\sim \mathcal{D}} [\ell (\Phi(X; S,\mathcal{H}^Q), y)] \nonumber \\ 
                \mathrm{s.t.}\qquad 
                & H^*(\lambda) \leq c \quad \forall \lambda \in \Lambda(\setS)
                \label{eq:xstable_infinite}
        \end{align}
         where $H^* (\lambda)$ is as defined in Section \ref{problem} and $\Lambda(\setS)\subseteq[-n+1,n]$.
         
         Note that, due to the reduction of Lipschitzness to a condition on the eigenvalues of the graph shift operator, we are able to easily extend our formulation to dynamic graph settings. Constraining our problem over eignevalues rather than families of graphs significantly reduces the dimensionality and computational difficulty of ensuring stability. This is also particularly salient in decentralized control applications where the underlying network can change dramatically, but eigenvalues can be reasonably bounded.

    \subsection{Scenario Optimization}
        Rather than constraining the worst-case eigenvalues, we follow the scenario approach introduced in \cite{Calafiore_Campi_2004}. By relaxing the semi-infinite constraint to a chance constraint which is then sampled, we identify a solution that, with high probability, satisfies our constraints while maintaining computational tractability and sacrificing less performance.
        
        Formally, we introduce a random variable $\lambda\in\Lambda(\setS)\subseteq\setR$ defined on a probability space $(\Lambda(\setS),\calF,\P)$ and relax the semi-infinite constrained problem to a chance constrained problem (CCP):
        \begin{align}
            \minimize_{\mathcal{H}_1,\dots,\mathcal{H}_Q}\quad& \mathop{\mathbb{E}}_{(X,y)\sim \mathcal{D}} [\ell (\Phi(X; S,\mathcal{H}^Q), y)] \nonumber \\
            \subjectto\quad & \P\left(\{\lambda\in\Lambda(\setS)\bigr|\ |H^{(q)}_{(f,g,:)}(\lambda)| \leq c\}\right)\geq1-\epsilon \quad\forall q\in[Q], f\in[G_{q-1}], g\in[G_q] 
            \label{eq:CCP}
        \end{align}
        We then draw a sample of $m$ eigenvalues $\bar\sigma:=\{\bar\lambda_1,\dots,\bar\lambda_m\}$ according to $\P$ on which to enforce the constraint to arrive at the following scenario program:
        \begin{align}
            \minimize_{\mathcal{H}_1,\dots,\mathcal{H}_Q}\quad& \mathop{\mathbb{E}}_{(X,y)\sim \mathcal{D}} [\ell (\Phi(X; S,\mathcal{H}^Q), y)] \nonumber \\
            \subjectto\quad & |H^{(q)}_{(f,g,:)}(\lambda)| \leq c \quad\forall q\in[Q], f\in[G_{q-1}], g\in[G_q],\lambda\in\bar\sigma 
            \label{eq:SP}
        \end{align}
        The following proposition, given by classical VC theory (\cite{COLT}), provides a sample complexity guarantee on the generalization of the scenario approach to solve \eqref{eq:CCP}.
        \begin{proposition}[Sample Complexity Bound via VC Theory]\\
        For any $\delta,\epsilon\in(0,1)$ and $\bar\sigma$ drawn according to $\P^{\otimes m}$ such that 
        \begin{equation}
            m\geq\Bigr\lceil\frac{4}{\epsilon}\left(K\ln{\left(\frac{12}{\epsilon}\right)}+\ln{\left(\frac{2}{\delta}\right)}\right)\Bigr\rceil
        \end{equation}
        the solution of \eqref{eq:SP} satisfies $\P\left(\{\lambda\in\Lambda(\setS)\bigr|\ H^{(q)}_{(f,g,:)}(\lambda) \leq c\}\right)\geq1-\epsilon$ with probability at least $1-\delta$ for all $q\in[Q], f\in[G_{q-1}], g\in[G_q] $.
        \end{proposition}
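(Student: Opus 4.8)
The plan is to recognize the scenario program \eqref{eq:SP} as an instance of consistent (realizable) PAC learning and to invoke the classical $\epsilon$-net sample complexity bound from VC theory, so that the only problem-specific work is to identify the relevant concept class and to bound its VC dimension by $K$. Fix a single filter index $(q,f,g)$ and, for a weight vector $h\in\setR^K$, associate to it the feasible region $R_h\triangleq\{\lambda\in\Lambda(\setS): H_h(\lambda)\le c\}$, where $H_h(\lambda)=\sum_{k=1}^K h_k\lambda^{k-1}$. By construction, the minimizer $\calH^Q$ of \eqref{eq:SP} produces for this filter a weight vector $h^\star$ whose region contains every sampled eigenvalue, i.e.\ $\bar\sigma\subseteq R_{h^\star}$; we are therefore in the consistent case, and the desired conclusion $\P(R_{h^\star})\ge1-\epsilon$ is exactly the statement that the learned region has generalization error at most $\epsilon$.

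First I would compute the VC dimension of the concept class $\mathcal{R}\triangleq\{R_h : h\in\setR^K\}$. The key structural fact is that the frequency response is a polynomial of degree $K-1$: as $h$ ranges over $\setR^K$, the functions $\lambda\mapsto H_h(\lambda)-c$ range over the entire $K$-dimensional vector space of polynomials of degree at most $K-1$. By the standard result that the sublevel sets of a $D$-dimensional vector space of real functions form a class of VC dimension at most $D$, the class $\mathcal{R}$ has VC dimension at most $K$, which is precisely the quantity appearing in the stated bound. For the matching lower bound I would use Vandermonde non-degeneracy: given any $K$ distinct eigenvalues and any target subset, interpolation produces an $h$ forcing $H_h$ to equal $0$ on the chosen points and $2c$ on the rest, so those $K$ points are shattered and $\mathrm{VCdim}(\mathcal{R})=K$.

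With $\mathrm{VCdim}(\mathcal{R})=K$ in hand, I would apply the classical $\epsilon$-net / uniform-convergence sample complexity theorem (\cite{COLT}): for $m\ge\lceil\frac{4}{\epsilon}(K\ln\frac{12}{\epsilon}+\ln\frac{2}{\delta})\rceil$ i.i.d.\ draws $\bar\sigma\sim\P^{\otimes m}$, with probability at least $1-\delta$ the sample $\bar\sigma$ is an $\epsilon$-net for the complementary class $\{R_h^c : h\in\setR^K\}$; that is, every region $R_h^c$ with $\P(R_h^c)>\epsilon$ contains at least one point of $\bar\sigma$. Since the scenario solution satisfies $\bar\sigma\subseteq R_{h^\star}$, its complement $R_{h^\star}^c$ contains no sampled point and must therefore obey $\P(R_{h^\star}^c)\le\epsilon$, i.e.\ $\P(R_{h^\star})\ge1-\epsilon$. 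Because this is a statement about the single fixed class $\mathcal{R}$ that holds uniformly over all its members, and each filter's learned region is drawn from this same class on the same sample $\bar\sigma$, one application of the theorem covers every $(q,f,g)$ simultaneously, which is why a single $\delta$ suffices for the ``for all $q,f,g$'' claim.

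I expect the VC-dimension step to be the main obstacle, since everything else reduces to citing the $\epsilon$-net theorem and observing realizability. The delicate part is casting the polynomial constraint as sign patterns of a finite-dimensional function space and verifying that the degree-$(K-1)$, $K$-coefficient structure pins the dimension at $K$. For the one-sided conclusion $H_h(\lambda)\le c$ this argument is immediate as above; I would also flag that the two-sided constraint $|H_h(\lambda)|\le c$ enforced in \eqref{eq:CCP}--\eqref{eq:SP} corresponds to an intersection of two such half-level sets, so the dimension bookkeeping must be checked separately if one wants the two-sided guarantee rather than the one-sided form stated here.
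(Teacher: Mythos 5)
Your proof is correct and takes essentially the same route as the paper's: bound the VC dimension of the degree-$(K-1)$ polynomial constraint family by $K$ and invoke the classical consistent-learner/$\epsilon$-net sample complexity theorem (Theorem 8.4.1 of \cite{COLT}). You are in fact more careful than the paper's three-line argument --- you justify the VC bound via the sublevel-set lemma for a $K$-dimensional function space, verify realizability of the scenario solution, explain why a single $\delta$ covers all $(q,f,g)$ without a union bound, and correctly flag that the two-sided constraint $|H(\lambda)|\le c$ needs separate bookkeeping, a gap the paper's ``split into two constraints'' step glosses over while the proposition's stated conclusion is, as you note, the one-sided form you prove.
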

        \begin{proof}
        First note that the constraint on the absolute value of $H(\lambda)$ can be broken into two constraints, each on a polynomial of $\lambda$ of the same degree as in the original constraint. Next, observe that each of the $2\sum_{q=1}^Q G_{q-1}G_q$ constraints is described by a $(K-1)$-degree polynomial in $\lambda$. Thus, the family of functions describing the constraint has VC dimension $K$. The result then follows from theorem 8.4.1 in (\cite{COLT}) .
        \end{proof}
        
        This result implies that in order to solve \eqref{eq:CCP}, we can simply sample a (large enough) number of eigenvalues according to $\P$, enforce the frequency response constraints on those eigenvalues, and guarantee that there is at least a $1-\epsilon$ fraction of constraints in $\Lambda(\setS)$ that are satisfied, with high probability. Thus, since this procedure has a finite number of constraints, we can again use Algorithm~\ref{alg:parseval} to solve problems in the dynamic graph setting, where the set of eigenvalues $\Lambda$ is no longer those contained in the spectrum of a particular graph shift operator, but rather the random draw of eigenvalues $\bar{\sigma}$. An appealing property of this approach is that since our Lipschitz constraints have reduced to enforcing a constraint on a single-variable polynomial (the frequency response), the sample complexity is linear in the number of filter taps, which is generally small. Note that one need not know the distribution $\P$ to follow this procedure. Simply being able to sample eigenvalues on which to enforce constraints is sufficient and can be done using the networks in training data.
        
    
\section{Connections to GNN Stability Under Graph Shifts}      \label{connection}
    The ability to enforce Lipschitzness of a GNN through conditions on its frequency response on specific eigenvalues also engenders connections between the various notions of stability. In (\cite{GNNStability}), the authors investigate conditions on which a GNN is stable with respect to graph shifts, i.e. the stability notion given in Definition~\ref{def:graph_perturbations}. They show that one can guarantee stability under relative graph shifts if the network's filters are \textit{integral Lipschitz} and the distance on graph shift operators $d$ is taken to be operator distance modulo permutations, i.e. $d(S, S') = \min_{P \in \calP} \|SP^\top - P^\top S'\|$, where $\calP$ is the set of permutation matrices.
    
    The integral Lipschitz condition implies that one simply needs to ensure that the function $\lambda \mapsto \lambda \frac{dH(\lambda)}{d\lambda}$ is bounded. Since $H(\lambda)$ is a $K-1$ degree polynomial, $\lambda \frac{dH(\lambda)}{d\lambda} = \sum_{k=1}^K h_k (k-1)\lambda^{k-1}$ is also a $K-1$ degree polynomial. If we define $h' = [0, h_1, 2h_2,\dots, (K-1)h_{K-1}]^\top$, we can use the exact same setup in the previous two sections to enforce this constraint on the modified filter coefficients $h'$. Moreover, this connection implies that if one enforces stability to graph shifts, one can obtain a bound on the stability under graph signal shifts, and vice versa.
    
\section{Experimental Results}
    \subsection{Static Graphs}
    In the static graph setting, we evaluate source localization, where there is a static graph shift operator $S$, and some information is allowed to propagate along the graph. Given the time series of propagated information, the GNN's task is to predict the original information source at a previous time step. We train a GNN with no constraints, and one with frequency response constraints via Algorithm~\ref{alg:parseval} (`Lipschitz'). To evaluate these models, we apply two types of noise at test time: Gaussian noise, and adversarial noise. For the former, if $X, y$ is our input graph signal and target, then we input $X^{\mathrm{AWGN}}_\sigma = X + \calN(0, \sigma I)$ to the GNN. For the latter, we use $X^{\mathrm{adv}}_\epsilon = \argmax_{X':\|X'-X\|_\infty \leq \epsilon} \ell(\Phi(X';S,\calH^Q), y)$. For both noises, we also compare with a GNN trained against each noise model (i.e. AWGN data augmentation and $\ell^\infty$-PGD training). 
    
    Shown in Figure~\ref{fig:sourceloc}, the performance of the standard GNN degrades quickly with increasing noise on the input signal. Data augmentation is able to slow this degradation and improve performance overall, but it too struggles (particularly in the AWGN case) once the power of the noise applied during evaluation surpasses that of its noisy training data. The Lipschitz GNN, however, maintains high performance with increasing noise while sacrificing very little performance on clean data. In the adversarial case, our method performs as well as the defense designed against the adversary, without any knowledge of the adversary. This corroborates the analytical results and shows that frequency response constraints are an effective method of ensuring robustness to input signal noise while being \textit{agnostic} to the data shift model.

    \begin{figure}[t]
        \centering
        {%
            \subfigure{%
            \label{fig:awgn}
            \includegraphics[scale=0.45]{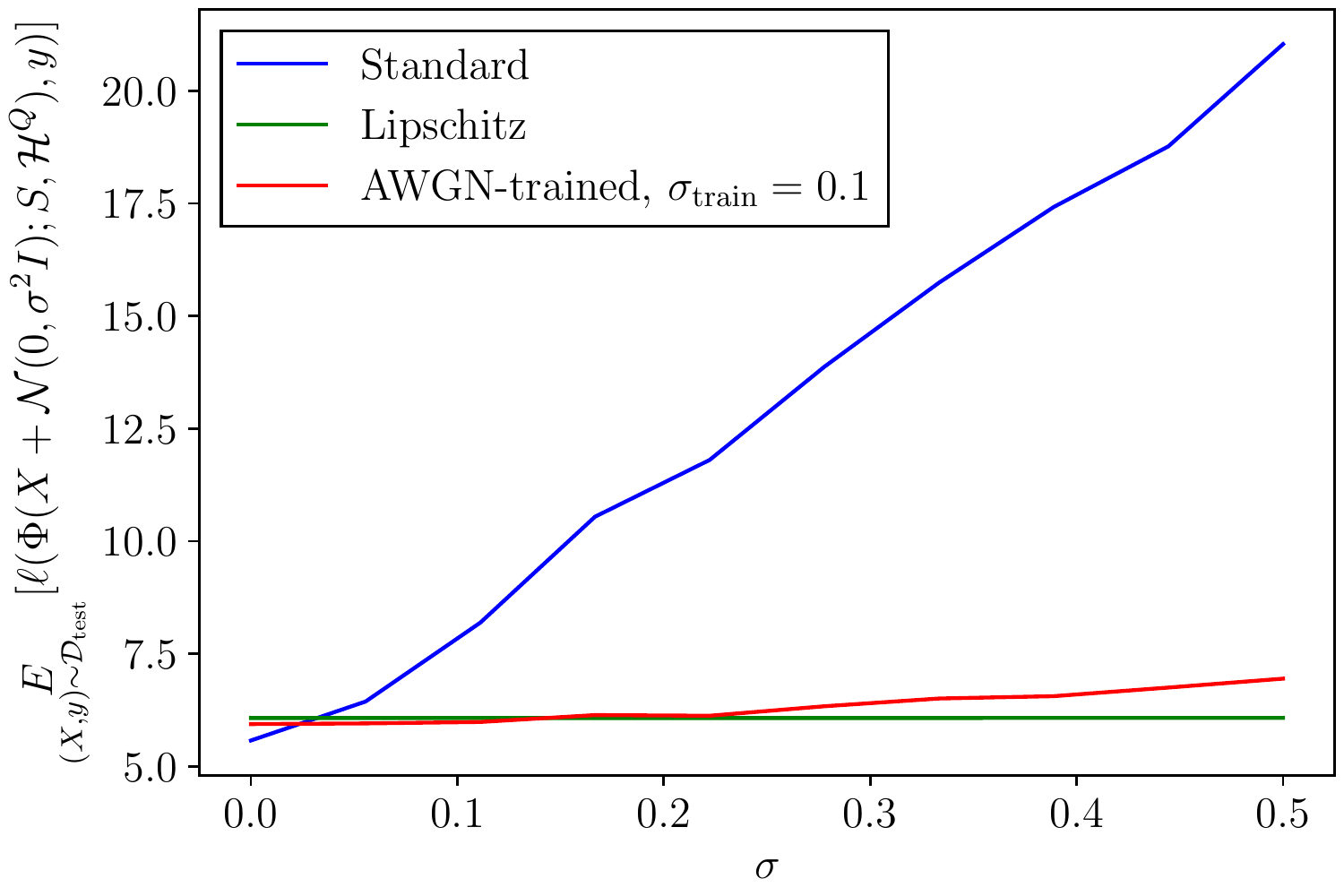}
        }\qquad 
            \subfigure{%
            \label{fig:unif}
            \includegraphics[scale=0.45]{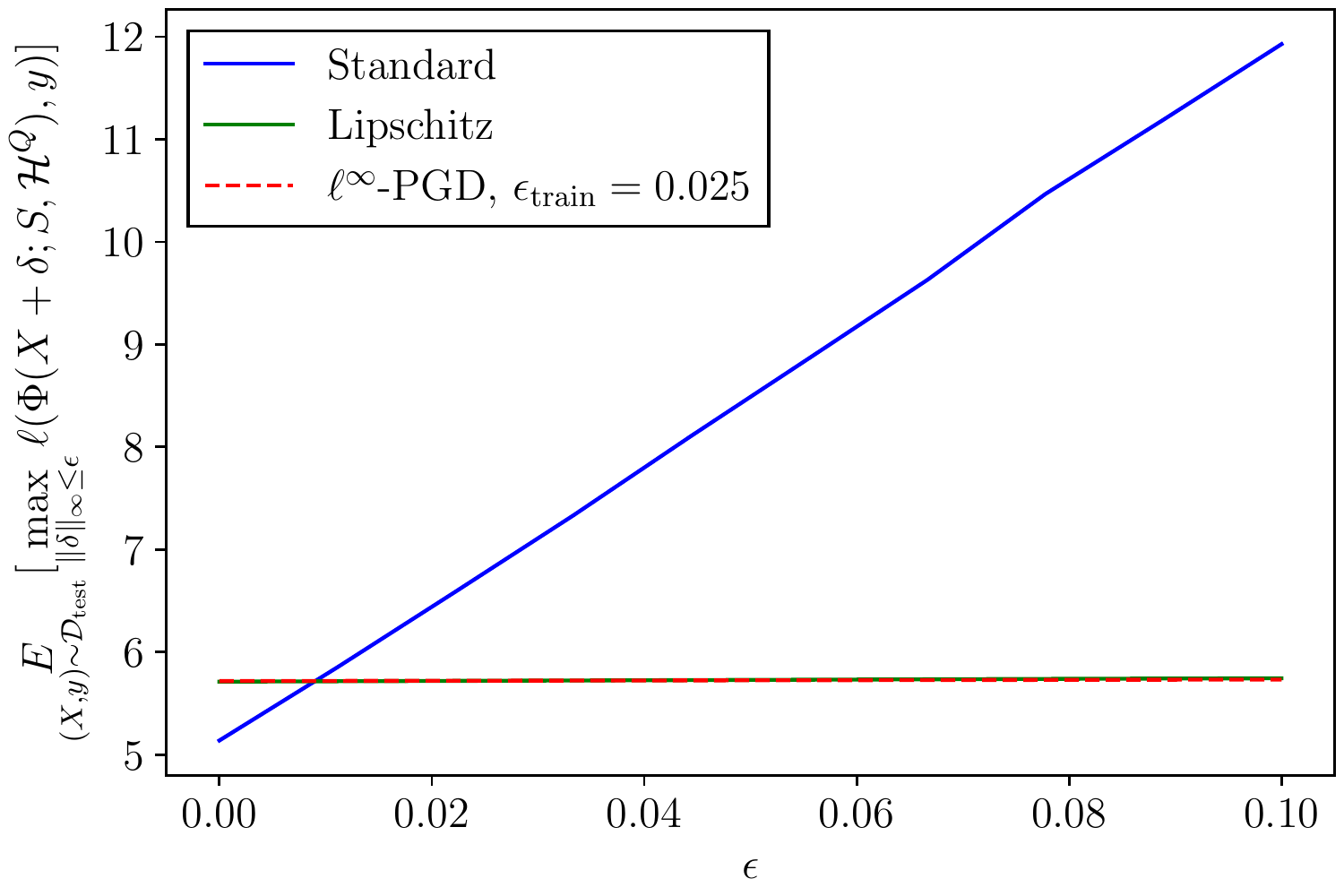}
            }
        }
       \caption{Source localization. In (\textit{a}), AWGN. In (\textit{b}), $\ell^\infty$ adversary.}
        \label{fig:sourceloc}
    \end{figure}

    \subsection{Dynamic Graphs}\label{flocking}
    In the dynamic graph setting, we use an example involving agents flocking together in a decentralized manner (\cite{gama2020graph}). In this example, there are agents that seek to move in the same direction at some velocity without hitting each other. If each agent is aware of all the other agents' positions and velocities at each time step, then each agent can apply the optimal \textit{centralized} policy. However, in practice, these agents are constrained by communication, and cannot ascertain information of far-away agents instantaneously. The objective is to learn a decentralized, communication-constrained policy that mimics the optimal centralized policy, using a GNN that respects the communication constraints. Due to the movement of the agents, the graphs change at each time step. Furthermore, the agents' sensor inputs of neighboring positions and velocities may be noisy, which we hope to combat using our stability framework over dynamic graphs (Section~\ref{dynamic}).
   
    In order to apply the scenario approach, we need to ascertain $\Lambda(\setS)$, where $\setS$ contains all communication-constrained graphs of the flocking agents. In practice, we set $\Lambda(\setS) = [a, b]$, where $a$ and $b$ are the min and max of the all the eigenvalues of all the graph shift operators in the training set, and sample $m$ constraints according to $\P = \mathrm{Unif}([a,b])$. In the flocking example, we set $[a,b]=[-0.75, 1.25]$, and $m=1000$. We enforce the constraints on these points using Algorithm~\ref{alg:parseval}, \begin{wrapfigure}{r}{0.5\textwidth}
        \centering
        \includegraphics[width=0.4\textwidth]{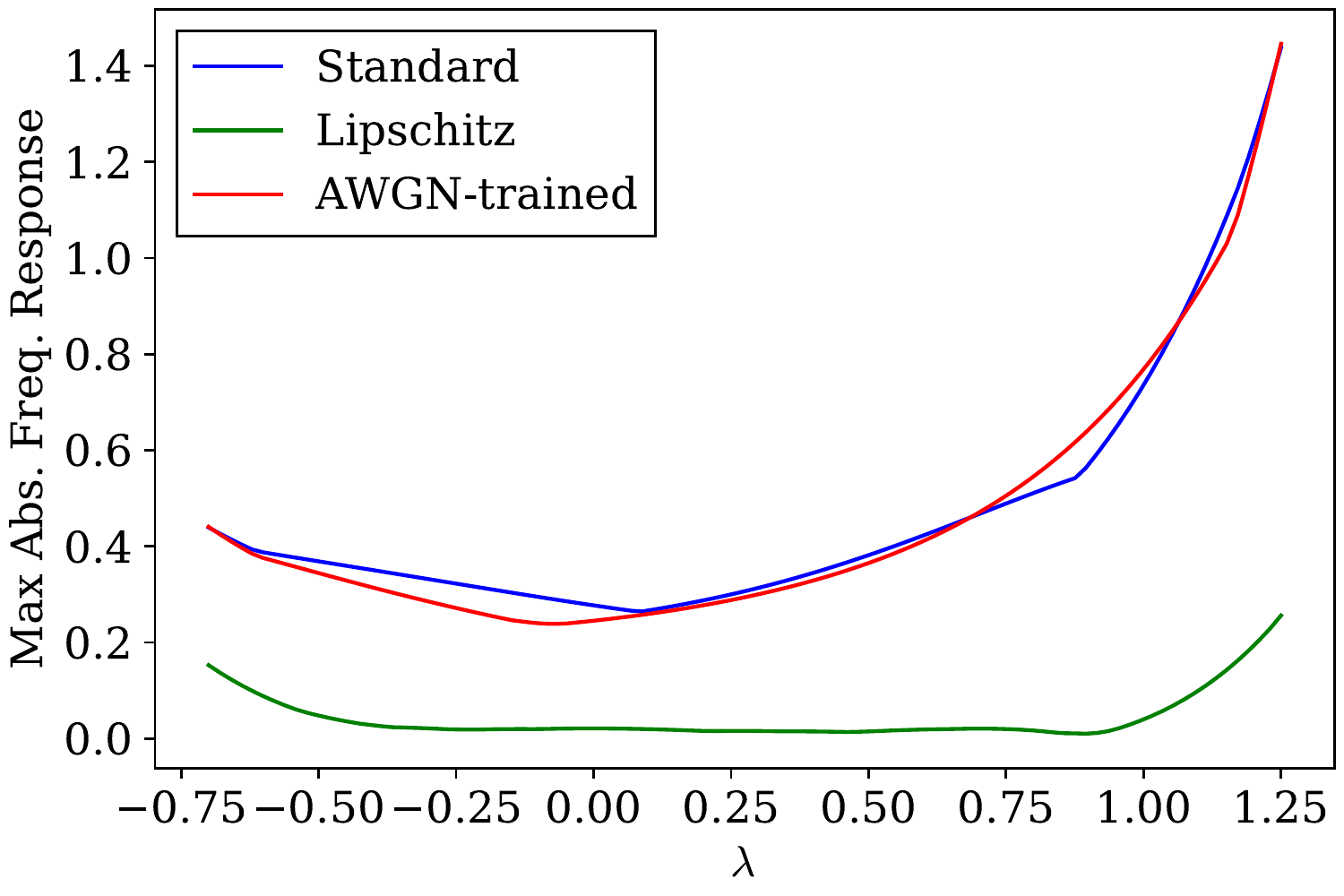}
        \vspace{-1em}
        \caption{Max absolute frequency response $H^*(\lambda)$ \eqref{eq:max_freq_resp} over filters of GNNs.}
        \vspace{-1em}
        \label{fig:freq_resp}
    \end{wrapfigure} and evaluate on the same setup as in the previous section, where both types of noise are added to the GNN controller's inputs for each agent. As shown in Figure~\ref{fig:flocking}, we see that the Lipschitz constrained GNN is again able to maintain stability in a model-agnostic fashion and outperform AWGN data augmentation when evaluated on noise that differs from its training set, and perform nearly as well as $\ell^\infty$-PGD when evaluated on $\ell^\infty$ adversarial attacks. In Figure~\ref{fig:freq_resp}, enforcing the frequency response constraints to each of the filters in the GNN via scenario approach does generalize to the constraints over the continuous range $[a,b]$ in practice, while data augmentation does not provide any additional constraints.

    \begin{figure}[t]
        \centering
        {%
            \subfigure{%
            \label{fig:awgn2}
            \includegraphics[scale=0.45]{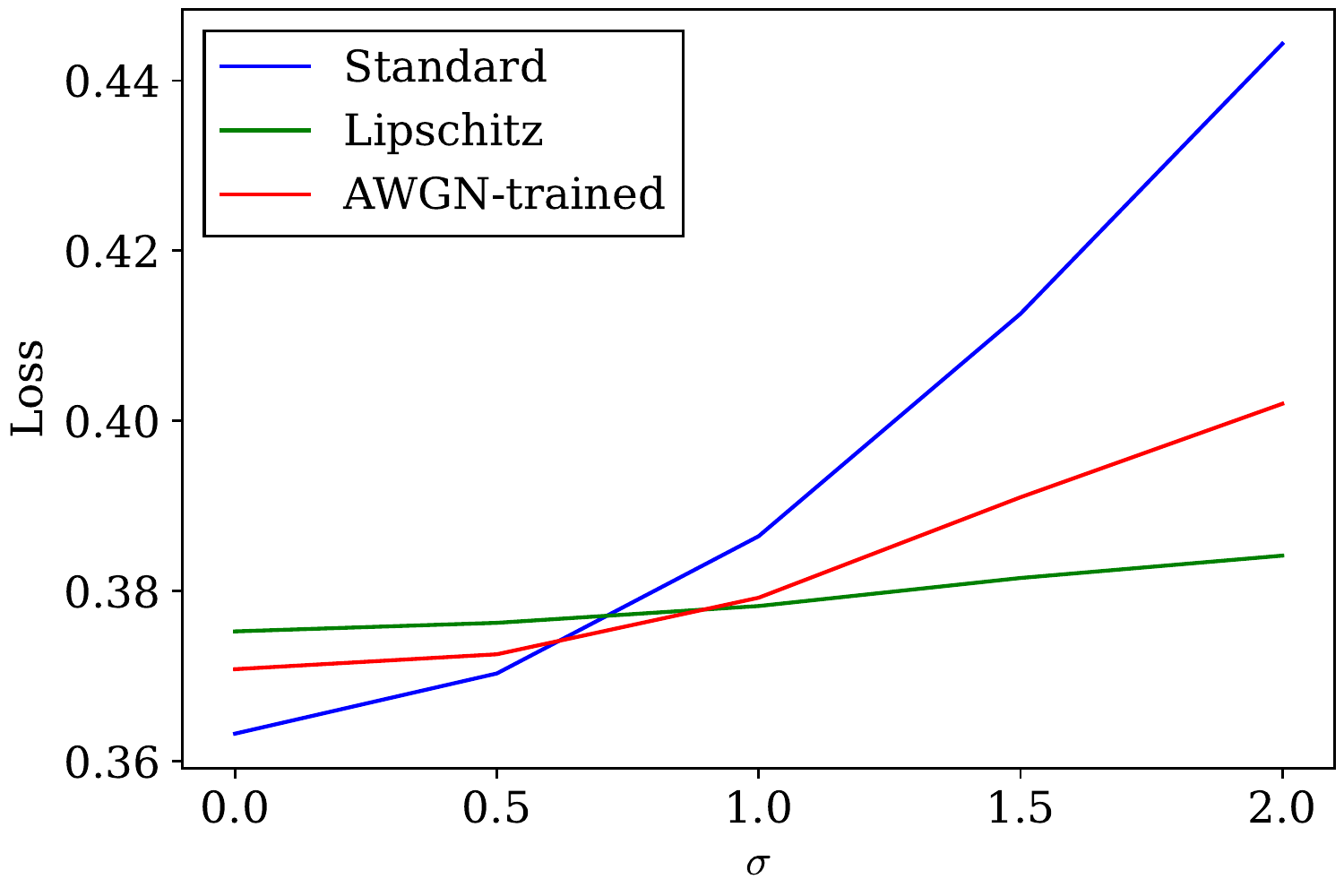}
        }\qquad 
            \subfigure{%
            \label{fig:unif2}
            \includegraphics[scale=0.45]{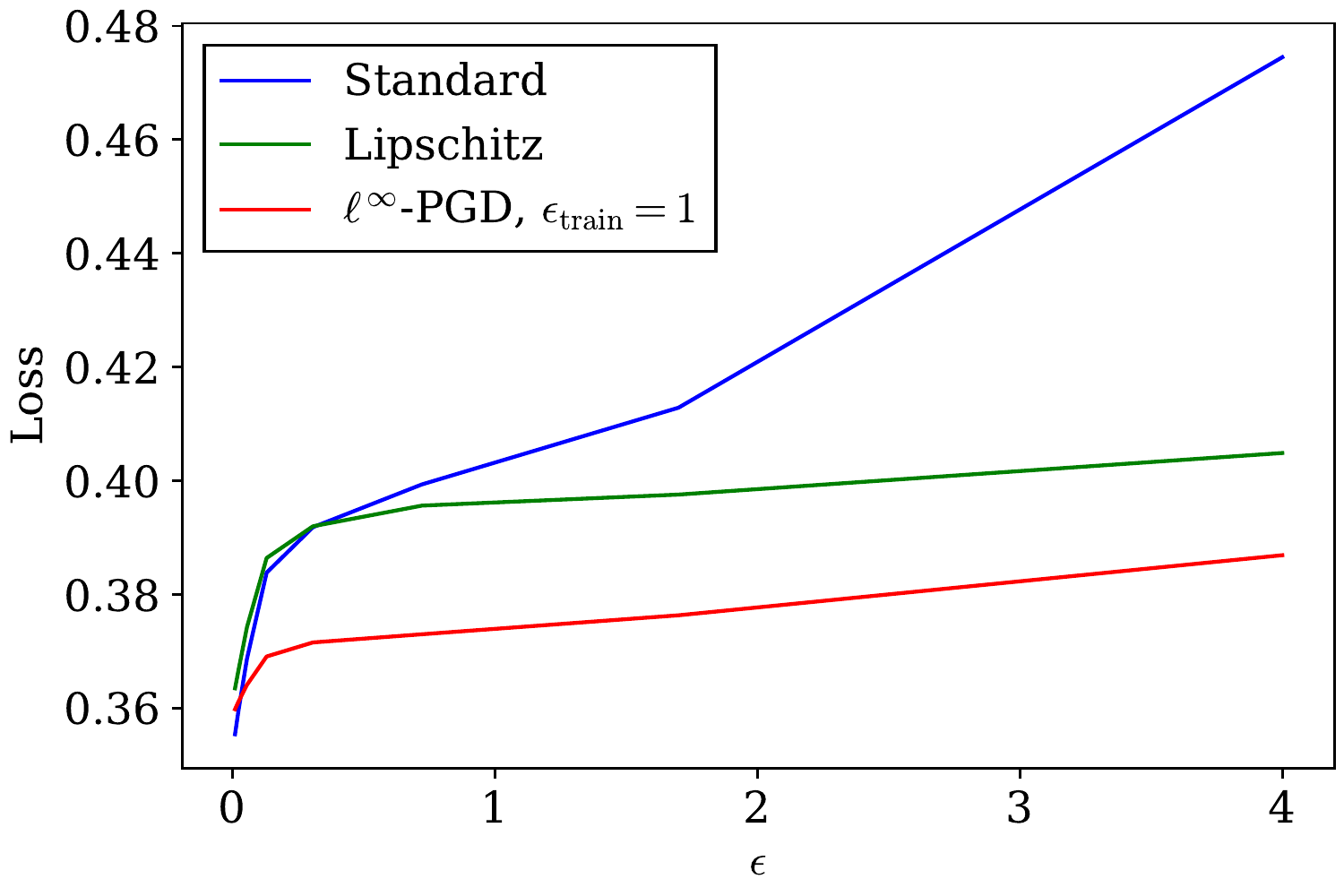}
            }
        }
        \caption{Decentralized control of flocking agents. Loss is measured as the error between the GNN's policy and the optimal flocking policy, averaged over the horizon. In (\textit{a}), evaluate on AWGN-perturbed data. In (\textit{b}),  $\ell^\infty$ adversarial noise. }
        \label{fig:flocking}
        \vspace{-1em}
    \end{figure}

\section{Conclusion}
In this paper, we propose a simple constrained optimization framework for enforcing stability to GNNs by controlling the Lipschitz constants. We show that this framework encompasses several different notions of GNN stability, and how scenario optimization allows for efficient computation with reasonable PAC-style guarantees. Experiments on noisy networked control settings, one in source localization, and one in decentralized flocking of agents, demonstrate the efficacy of our approach. 

\acks{The work of Eric Lei is supported by a NSF Graduate Research Fellowship. The work of Raghu Arghal and Shirin Saeedi Bidokhti is supported by NSF CAREER Award 2047482 and NSF Grant 1910594. We thank Hamed Hassani, Alejandro Ribeiro, and Alexander Robey for helpful discussions.}

\bibliography{ref}



\end{document}